\def\vec#1{{#1}}
\def\Var{\mathrm{Var}} 
\def\Cov{\mathrm{Cov}} 
\def\E{\mathbb{E}}
\def\a{\vec{a}}
\def\prior{\mathcal{P}}
\newtheorem{theorem}{Theorem}
\newtheorem{theorem*}{Theorem}
\newtheorem{lemma}{Lemma}
\def\newpi{PI++\ }
\begin{document}

\title{Off-Policy Evaluation of Slate Policies under Bayes Risk}

\author{Nikos Vlassis \and Fernando Amat Gil \and  Ashok Chandrashekar }

\date{Netflix Research, CA, USA }

\maketitle

\begin{abstract}
We study the problem of off-policy evaluation for slate bandits, for the typical case in which the logging policy factorizes over the slots of the slate. We slightly depart from the existing literature by taking Bayes risk as the criterion by which to evaluate estimators, and we analyze the family of `additive' estimators that includes the pseudoinverse (PI) estimator of Swaminathan et al.\ (2017). Using a control variate approach, we identify a new estimator in this family that is guaranteed to have lower risk than PI in the above class of problems. In particular, we show that the risk improvement over PI grows linearly with the number of slots, and linearly with the gap between the arithmetic and the harmonic mean of a set of slot-level divergences between the logging and the target policy. In the typical case of a uniform logging policy and a deterministic target policy, each divergence corresponds to slot size, showing that maximal gains can be obtained for slate problems with diverse numbers of actions per slot.
\end{abstract}

\section{Introduction}
\label{sec:intro}
Online services (news, music and video streaming, social media, e-commerce, app stores, etc) typically optimize the content on the homepage of a customer via interactive machine learning. These landing pages can be viewed as high dimensional slates with each slot on the slate having multiple candidate items (actions). For instance, a personalized news service may have separate slots for local news, sports, politics, etc with multiple news items for each slot to select from. In many cases, the number of slot actions may be in the hundreds or thousands, thus making the overall slate a high dimensional construct that can require very large amounts of user engagement data for optimization. The online service may have a robust A/B experimentation system to iteratively optimize the experience, but, given the combinatorial action space, the latter may entail large regret. 

In these cases it becomes vital to have an efficient \emph{off-policy evaluation}(OPE) framework to make progress. In OPE we use collected interaction data from a (stochastic) logging policy in order to evaluate a new candidate target policy, thus reducing the need for A/B experiments and their associated cost
\citep{dudik2011doubly, bottou2013counterfactual,
thomas2016data, swaminathan2017nips, 
kallus2018robust, 
farajtabar2018icml,
joachims2018iclr,Gilotte_2018, su2020doubly}. 
Some examples of the use of OPE in real-world slate optimization problems include
\citet{shalom_2016} who apply OPE to a large scale real world recommender system that handles purchases from tens of millions of users, 
\citet{Hill_2017} who present a method to optimize a message that is composed of separated widgets (slots) such as title text, offer details, image, etc, and
\citet{McInerney_2020} who apply OPE to a slate problem involving sequences of items such as music playlists. 

Our work is motivated by real world slate problems in which slots may differ a lot in terms of the number of available actions. Using the pseudoinverse (PI) estimator of \citet{swaminathan2017nips} as our base estimator, and taking a control variate approach, we identify a new PI-like estimator that is guaranteed (under certain conditions) to have lower Bayes risk than PI. The new estimator is defined through a \emph{weighted} sum over slots, with the weights optimized to reflect differences in slot action counts. We provide theoretical support and empirical evidence that the risk gains over PI can be substantial.

\section{Notation and setup}
\label{notation}

We consider contextual slate bandits where each slate has $K$ slots. We will use $[K]$ to denote the set $\{1,2,\ldots,K\}$.
The number of available actions of slot $k$ is denoted by $d_k$.
We will use $A$ to denote a random slate,  and $A^k$ to refer to its $k$-th slot action ($k \in [K]$). We will use $X$ for a random context, and $R$ for random slate-level reward. We will use corresponding lower case $(a,a^k,x,r)$ to refer to realized values. 
  
We assume a Bernoulli reward model at the slate level, denoted by $p(a,x)$. (This is a simplifying assumption, mainly for presentation purposes. Our main theorem holds for continuous rewards too.) We assume i.i.d.\ data $\{x_i, a_i, r_i\}_{i=1}^n$, where the slates $\a_i$ are obtained using a stochastic logging policy $\mu(A|X)$ that factorizes over the slot actions conditioned on context
\[
\mu(A | X) = \prod_{k=1}^K \mu_k(A^k | X) 
\]
(which is a typical scenario in practical applications that motivated our work).
The rewards $r_i$ are draws from the corresponding slate-level Bernoulli, $r_i \sim p(a,x)$. We want to use these data to estimate the value $v_\pi \equiv \E_\pi [R]$ of a target policy $\pi$.
We will assume absolute continuity of $\pi$ with
respect to $\mu$, that is $\mu(a|x) > 0$ when $\pi(a|x) > 0$.

To simplify notation, we will throughout use $\E[\cdot]$, $\Var(\cdot)$, and $\Cov(\cdot,\cdot)$ (with no subscripts) to refer to population moments w.r.t $\mu$. In all other cases, moments will be subscripted accordingly (e.g., $\E_\pi[\cdot]$ will denote expectation w.r.t $\pi$).

\section{A general family of estimators}
We will consider estimators of the general form
\begin{align}
    \label{generalestimator}
    t = \frac{1}{n} \sum_{i=1}^n \big(r_i \, g_i - f_i \big) \, ,
\end{align}
where $g_i \equiv g(a_i,x_i)$ and $f_i \equiv f(a_i,x_i)$ for  
some functions $g$ and $f$ that depend on $\mu$ and $\pi$ (and may also involve extra free parameters).
Note that $a_i$ and $x_i$ are random draws (from $\mu$), whereas the functions $g$ and $f$ are assumed fixed.  
The classical inverse propensity scoring (IPS) estimator \citep{horvitz1952generalization} is an unbiased estimator from the family \eqref{generalestimator}, corresponding to the choice
\begin{align}
    g_i = \frac{\pi(a_i | x_i)}{\mu(a_i | x_i)} \, ,
    \qquad 
    f_i = 0 \, .
\end{align}
When $a_i$ is a slate, IPS is known to suffer from very high variance \citep{swaminathan2017nips}.

To mitigate the variance explosion in slate problems, we will restrict attention to functions $g$ and $f$ that decompose additively over the slots of the slate, e.g.,
\begin{align}    
    \label{1way}
    g_i = \lambda + \sum_{k=1}^K w_k \, 
    \frac{\pi(a_i^k | x_i)}{\mu_k(a_i^k | x_i)} \, ,
\end{align}
where $\lambda \in \mathbb{R}$ and $\{w_k \in \mathbb{R}\}_{k=1}^K$ are free parameters. Under a factored logging policy $\mu$, this family of estimators includes the `pseudoinverse' (PI) estimator of \citet{swaminathan2017nips}, which is obtained for
\begin{align}   
\lambda=1-K \, , \qquad 
w_k=1 \ \ \forall k \in [K] \, , \qquad
f_i=0 \, .
\end{align}

\section{Bayes risk}

We will analyze the above family of estimators using   \emph{Bayes risk}. The Bayes risk $\rho(t)$ of an estimator $t$ is defined as the expected mean squared error (MSE) of $t$, where the expectation is taken with respect to a prior $\prior$ over the true model primitives (here the slate-level Bernoulli rates $p(a,x)$). For example, the Bayes risk of an unbiased estimator $t$ in \eqref{generalestimator} is
\begin{align}
    \label{bayesriskgeneral}
    \rho(t) =
        \E_{\prior} \big[\Var(t) \big] \, .
\end{align}

The use of Bayes risk is motivated by practical applications in which the analyst often possesses some knowledge about the expected rewards of the problem (e.g., from past experiments). As we will see next, when we restrict our analysis to a certain subfamily of \eqref{generalestimator}, the Bayes risk will depend only on the prior \emph{mean} of $p(a,x)$ under $\prior$. Knowledge of the latter is often readily available in applications.

To facilitate the analysis of risk for estimators in the family \eqref{generalestimator} we will restrict attention to estimators in which their $f$ component (which can be regarded as a control variate over the $g$ component) confers zero additional bias. Subject to that condition, we get a compact analytical bound for the variance $\Var(t)$ as we show next. 

Since we work with i.i.d.\ data, we can compute population moments of $t$ in \eqref{generalestimator} by analyzing a single term. 
Let $R$ be the random slate-level reward, as defined in Section \ref{notation}.
To simplify the derivations, we additionally define the following random variables: 
\begin{itemize}
    \item 
    $G \equiv g(A,X)$ is the application of the function $g$ on the random slate-context pair $(A,X) \sim \mu$, and $F \equiv f(A,X)$ is defined analogously.
    \item
    $P \equiv p(A,X)$ is random slate-level Bernoulli rate, where, as in the definition of $G,F$ above, the randomness is due to $(A,X) \sim \mu$.
\end{itemize}
 The random quantity of interest is then 
\begin{align}
    T = R G  - F  \qquad 
    \mbox{s.t.} \qquad  \E[ F ] = 0\, .
\end{align}
For the variance of $t$ we have
$\Var(t) = \frac{1}{n} \Var(T)$, and  
we can compute the latter as follows:

%
\begin{lemma}
Subject to $\E[F]=0$ we have
\begin{equation}
\label{VarTbound}
\mathrm{Var}[T] = 
    \E [ P G^2 ] 
     - \E [P G]^2
    + \E [ F^2 ] 
    - 2 \, \E [ P G F ] . \hfill
\end{equation}

\end{lemma}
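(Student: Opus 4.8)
The plan is to expand $\Var[T]=\E[T^2]-\E[T]^2$ and evaluate each expectation by conditioning on the slate--context pair $(A,X)\sim\mu$, exploiting that $G$, $F$, and $P$ are all deterministic functions of $(A,X)$, while $R$ is, conditionally on $(A,X)$, a Bernoulli draw with mean $P$.

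First I would handle the mean. Writing $\E[T]=\E[RG]-\E[F]$ and invoking the hypothesis $\E[F]=0$, the tower property gives $\E[RG]=\E\big[\E[R\mid A,X]\,G\big]=\E[PG]$, so $\E[T]=\E[PG]$ and hence $\E[T]^2=\E[PG]^2$, which accounts for the single negative term in \eqref{VarTbound}.

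Next I would expand the second moment, $T^2=R^2G^2-2RGF+F^2$, and take conditional expectations given $(A,X)$, under which the only remaining randomness sits in $R$. Here the Bernoulli assumption does the work: $R\in\{0,1\}$ forces $R^2=R$ almost surely, so $\E[R^2G^2\mid A,X]=\E[R\mid A,X]\,G^2=PG^2$ and likewise $\E[RGF\mid A,X]=PGF$, while $F^2$ is already $(A,X)$-measurable. Averaging over $(A,X)$ then gives $\E[T^2]=\E[PG^2]-2\E[PGF]+\E[F^2]$, and subtracting $\E[T]^2=\E[PG]^2$ produces exactly the claimed identity.

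I do not expect a genuine obstacle: the argument is a direct application of the tower property, and the only nontrivial ingredient is the idempotence $R^2=R$, which is precisely what the slate-level Bernoulli model supplies. The two points worth stating carefully are (i) that $F$ is a function of $(A,X)$ alone and not of $R$, which is what collapses the cross term to $\E[PGF]$ rather than leaving a conditional-covariance remainder, and (ii) that for continuous rewards one would instead carry $\E[R^2\mid A,X]$ in place of $P$ in the first term — hence the statement is phrased under the Bernoulli assumption, even though the downstream risk analysis only requires the first conditional moment.
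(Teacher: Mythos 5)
Your proof is correct, and it reaches the identity by a mildly different route than the paper. You expand $\Var(T)=\E[T^2]-\E[T]^2$ directly, condition on $(A,X)$, and use the idempotence $R^2=R$ of the Bernoulli reward to collapse $\E[R^2G^2]$ to $\E[PG^2]$ and $\E[RGF]$ to $\E[PGF]$. The paper instead first splits $\Var(T)=\Var(RG)+\Var(F)-2\,\Cov(RG,F)$ and then applies the law of total variance to $\Var(RG)$, obtaining $\Var(PG)+\E[P(1-P)G^2]=\E[PG^2]-\E[PG]^2$; the terms $\Var(F)=\E[F^2]$ and $\Cov(RG,F)=\E[PGF]$ follow from $\E[F]=0$ exactly as in your mean computation. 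The two arguments use the Bernoulli assumption at the same essential point (your $R^2=R$ is the same fact as the paper's conditional variance $P(1-P)$), so they are equivalent in content. What the paper's decomposition buys is that it isolates the control-variate structure --- the $F$-dependent contribution appears explicitly as $\Var(F)-2\,\Cov(RG,F)$, the standard control-variate variance correction that drives the rest of the analysis --- whereas your direct expansion is shorter, avoids the law of total variance, and makes visibly explicit (as you note) exactly where the argument would change for non-Bernoulli rewards, namely by replacing $P$ with $\E[R^2\mid A,X]$ in the first term only. Your two cautionary remarks, that $F$ is $(A,X)$-measurable so the cross term has no residual, and that the first term is the only one sensitive to the second conditional moment of $R$, are both accurate.
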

%
\begin{proof}
We have
\begin{align}
\label{varT}
\Var(T) = \Var(RG) + \Var(F) - 2 \, \Cov(RG,F) . \quad
\end{align}
Using the law of total variance, the first term in \eqref{varT} can be computed as follows:
\begin{align}
    & \Var (RG)
    =
    \Var(PG)
    +
    \E \big[ P (1-P) G^2 \big]  
    \\
    & \qquad = 
    \E [ P^2 G^2 ] 
    - 
     \E [P G]^2
    +
    \E [ P (1-P) G^2 ]  
    \\ 
    & \qquad =
    \E [ P G^2 ] 
    -
    \E [P G]^2 \, .
\end{align}
The last two terms in \eqref{varT} can be computed as follows:
\begin{align}
\Var(F) &= 
    \E [ F^2 ] - {\underbrace{\E[F]}_{=0}}^2
    = \E [ F^2 ] \label{varF} \\
\Cov(RG,F) &=  
    \E [ R G F ]  - 
    \E [ R G ] \, \underbrace{\E[F]}_{=0} = \E [ P G F ] \label{covRGF}
\end{align}
where in the last equation we used the tower rule for expectations.
The result \eqref{VarTbound} then follows.
\end{proof}

A few observations follow from \eqref{VarTbound}. First, we note that we can bound the variance $\Var(T)$ by a quantity (dropping the second term) in which the Bernoulli rates $P \equiv p(A,X)$ appear only linearly. This implies that, for unbiased estimators, the Bayes risk \eqref{bayesriskgeneral} can be bounded by a quantity in which the rates appear only by their expectation under $\prior$. That prompts ideas for directly optimizing such a bound of Bayes risk under decompositions such as \eqref{1way} (but we do not pursue this route here).
A second observation from \eqref{VarTbound} is that the interaction between $G$ and $F$ (the last term in \eqref{VarTbound}) involves again the Bernoulli rates $P \equiv p(A,X)$ only linearly (which is not a surprise, since that correponds to a covariance term). 

The above has motivated the following approach to analyze the family of estimators \eqref{generalestimator}: We assume that the $g$ component of the estimator is fixed and we are only free to vary the $f$ component (a control variate approach). This approach has the advantage that it `works' regardless of whether the base estimator (corresponding to $f=0$) is biased or not, as we only care about the reduction of Bayes risk through $f$.
Let us denote by $\bar P$ the mean Bernoulli rate under prior $\prior$ (we assume for simplicity that $\bar P$ is independent of $A,X$). Then, using \eqref{VarTbound} and subject to $\E[F]=0$, the Bayes risk as a function of $F$ reads 
\begin{align}
    \label{bayesriskbound}
    n \, \rho(t) = 
\E [ F^2 ] 
    - 2 \bar P \, \E [  G F ]  
    + \mbox{const.} \hfill
\end{align}
In the next section we will use this simplified form of Bayes risk to analyze estimators in the family \eqref{generalestimator}.

\section{Additive estimators}

We now return to the `additive' decomposition \eqref{1way} for parametrizing the random $G$ and $F$ in \eqref{bayesriskbound}. 
To simplify derivations, we define the following slot-level `importance weight' random variables:
\begin{align}
\label{Yk}
Y_k \equiv \frac{\pi(A^k | X)}{\mu_k(A^k | X)} \qquad \forall k \in [K] \, .
\end{align}
Note that $\E [Y_k Z | X] = \E_\pi [Z | X]$ for any $Z$ (note the change of measure; the first expectation is over $\mu$, the second is over $\pi$). It  follows that
\begin{align}
\E [Y_k | X] &= 1 \\
\E [Y_k^2 | X] &= \E_\pi [Y_k | X] \\
\Var(Y_k | X) &= \E_\pi [Y_k | X] - 1 \\
\E [Y_k ] &= \E \big[\E [Y_k | X]\big] = 1 \label{expYk} \\
\Var(Y_k) &= \E \big[ \Var(Y_k | X) \big] + \\
& \quad \Var \big( \E [Y_k | X] \big) = \E_\pi [Y_k] - 1  \label{varYk} \\
\Cov(Y_k,Y_j) &= \E \big[ \Cov(Y_k,Y_j | X) \big] + \\ 
    & \hspace*{-25pt} \Cov \big( \E[Y_k|X], \E[Y_j|X] \big) = 0 \quad (k \neq j) \quad \label{covYkYj}
\end{align}
where in \eqref{varYk} we used the law of total variance and the fact that $\Var \big( \E [Y_k | X] \big) = 0$, and in \eqref{covYkYj} we used the law of total covariance and the fact that both terms are zero when $\mu(A|X)$ factorizes over slots.
We will henceforth more compactly write 
\begin{align}
\alpha_k \equiv \Var(Y_k) = \E_\pi [Y_k] - 1 \, . \label{alpha}
\end{align}
It follows from \eqref{varYk} that the $\alpha_k$ quantities in \eqref{alpha} define a set of `slot-level' $\chi^2$-divergences between $\mu$ and $\pi$.
As an example, when $\mu$ is uniform and $\pi$ is deterministic we have $\alpha_k = d_k - 1$, where $d_k$ is the number of available actions of the $k$-th slot.
Our main result next is formulated in terms of this set of slot-level divergences between $\mu$ and $\pi$.

We can now use the definition of $Y_k$ from \eqref{Yk}, and the decomposition \eqref{1way}, to express the random $G$ and $F$ in \eqref{bayesriskbound}. We restrict attention to a subfamily of estimators in \eqref{generalestimator} in which the $g$ component is given by an \emph{unweighted} sum (as in the PI estimator) and the $f$ component is given by a \emph{weighted} sum with a zero constant term:
\begin{align}
G &= \lambda + \sum_{k=1}^K Y_k \, ,
\qquad \lambda \in \mathbb{R} \label{G1way} \\
F &= \sum_{k=1}^K w_k \, Y_k  \, ,
\qquad w_k \in \mathbb{R} 
 \ \ \forall k \in [K] \label{F1way}
 \, .
\end{align}
As we will see next, this subfamily contains an estimator that is never worse (and often is much better) than PI in terms of Bayes risk. In the light of \eqref{bayesriskbound}, that implies the existence of a decomposition for $F$ such that
$\E [ F^2 ] - 2 \bar P \E[G F] \leq 0$.

We first show that, for the choice of $G$ and $F$ from \eqref{G1way} and \eqref{F1way}, and subject to $\E[F]=0$, the expectations that appear in the Bayes risk \eqref{bayesriskbound} admit compact analytical solutions. (We will next write $\sum_k$ for $\sum_{k=1}^K$.)

\begin{lemma}
For $G$ and $F$ according to \eqref{G1way} and \eqref{F1way}, and subject to $\E[F]=0$, we have
\begin{align}
\E [F^2] &= \sum_k w_k^2 \, \alpha_k      \label{EF2}
\\
\E [ G F ] &= \sum_k w_k \, \alpha_k     \label{EGF}
\end{align}
with $\alpha_k$ defined in \eqref{alpha}.
\end{lemma}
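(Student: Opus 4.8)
The plan is to expand the two quadratic expressions $\E[F^2]$ and $\E[GF]$ directly in terms of the slot-level importance weights $Y_k$ from \eqref{Yk}, reduce every term to a first- or second-order moment of the $Y_k$'s, and then invoke the constraint $\E[F]=0$ to cancel the leftover cross terms. The only facts needed are the ones already derived above: $\E[Y_k]=1$ from \eqref{expYk}; $\Var(Y_k)=\alpha_k$, equivalently $\E[Y_k^2]=\alpha_k+1$, from \eqref{varYk}--\eqref{alpha}; and $\Cov(Y_k,Y_j)=0$ for $k\neq j$ from \eqref{covYkYj}, equivalently $\E[Y_kY_j]=1$.

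The first step is to observe that, under the parametrization \eqref{F1way} and by \eqref{expYk}, the side condition $\E[F]=0$ is equivalent to $\sum_k w_k = 0$. This is the single place where the constraint is used, and it is exactly what makes the final formulas clean. Next, for $\E[F^2]$ I would write $F^2=\sum_{k}\sum_{j} w_k w_j\, Y_k Y_j$ and take expectations term by term, splitting the double sum into its diagonal part (contributing $\sum_k w_k^2\,\E[Y_k^2]=\sum_k w_k^2(\alpha_k+1)$) and its off-diagonal part (contributing $\sum_{k\neq j} w_k w_j\,\E[Y_kY_j]=\sum_{k\neq j} w_k w_j$). Adding these gives $\E[F^2]=\sum_k w_k^2\alpha_k + \big(\sum_k w_k\big)^2$, and the last term vanishes by $\sum_k w_k=0$, which is \eqref{EF2}.

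For $\E[GF]$ I would expand $GF=\lambda\sum_k w_k Y_k + \sum_{k}\sum_{j} w_k\, Y_j Y_k$. The $\lambda$ term equals $\lambda\sum_k w_k=0$. For the double sum, fixing $k$ and summing over $j$ gives $\sum_j\E[Y_jY_k]=\E[Y_k^2]+\sum_{j\neq k}\E[Y_jY_k]=(\alpha_k+1)+(K-1)=\alpha_k+K$, so the double sum equals $\sum_k w_k(\alpha_k+K)=\sum_k w_k\alpha_k + K\sum_k w_k = \sum_k w_k\alpha_k$, again using $\sum_k w_k=0$. This yields \eqref{EGF}.

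There is no real obstacle here beyond careful bookkeeping: the computation is entirely routine once one keeps the diagonal and off-diagonal contributions separate and recognizes that $\E[F]=0\Leftrightarrow\sum_k w_k=0$ is precisely the identity that annihilates the residual $\big(\sum_k w_k\big)^2$ and $K\sum_k w_k$ terms. The one thing worth stating explicitly in the write-up is this equivalence, since everything downstream (in particular the later claim that $\E[F^2]-2\bar P\,\E[GF]\le 0$ for a suitable choice of $w$) rests on the compact forms \eqref{EF2}--\eqref{EGF}.
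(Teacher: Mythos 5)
Your proof is correct and follows essentially the same route as the paper: both reduce everything to the moments $\E[Y_k]=1$, $\E[Y_k^2]=\alpha_k+1$, and the cross-slot uncorrelatedness \eqref{covYkYj}, the only difference being that you expand the raw second moments as double sums and cancel the residual $\bigl(\sum_k w_k\bigr)^2$ and $K\sum_k w_k$ terms via $\sum_k w_k=0$, whereas the paper passes through $\Var(F)$ and $\Cov(G,F)$ so that $\E[F]=0$ is used only once. Your explicit remark that $\E[F]=0$ is equivalent to $\sum_k w_k=0$ is a useful clarification that the paper defers to the proof of the main theorem.
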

%
\begin{proof}
For the term $\E[F^2]$ we can write
\begin{align}
\E [F^2]
&= \Var(F) +  {\underbrace{\E [F]}_{=0} }^2 \\
& =  \sum_k w_k^2 \, \Var(Y_k)  + 
    \sum_{k, j \neq k} \underbrace{\Cov ( w_k Y_k, w_j Y_j )}_{=0} \\
    &= 
    \sum_k w_k^2 \, \alpha_k \, ,
\end{align}
where in the second equation we used \eqref{covYkYj}.
Analogously, for the term $\E[GF]$ we have
\begin{align}
& \E [ G F ]  
= \Cov ( G, F ) +  \E [ G ] \, \underbrace{\E [ F ]}_{=0} \\
&=  \sum_k \Cov ( Y_k, w_k Y_k ) + 
    \sum_{k, j \neq k} \underbrace{\Cov ( Y_k, w_j Y_j )}_{=0}   \\
& =  \sum_k w_k \, \Var ( Y_k )  = 
    \sum_k w_k \, \alpha_k \, .
\end{align}
\end{proof}
Now we have all ingredients in place to establish our main result:
\begin{theorem}
Under a factorized logging policy, there exists an estimator in the family \eqref{generalestimator} that is guaranteed to have lower Bayes risk than PI. In particular, the improvement in risk is
\begin{align}
    \label{mainresult}
  \bar P^2 \, K \, (M - H) \ge 0
 \, ,
\end{align}
where $\bar P$ is the mean Bernoulli rate under prior $\prior$, $K$ is the number of slots,
$M$ is the \emph{arithmetic} mean of the set of divergences $\{\alpha_k\}_{k=1}^K$ defined in \eqref{alpha},
and $H$ is the \emph{harmonic} mean of that set.
\end{theorem}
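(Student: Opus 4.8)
The plan is to exploit the control-variate setup already in place: keep the $g$-component equal to that of PI (so $G = (1-K) + \sum_k Y_k$) and optimize only over admissible $f$-components of the form $F = \sum_k w_k Y_k$. Combining \eqref{bayesriskbound} with Lemma 2, for any such choice
\[
n\,\rho(t) \;=\; \sum_k w_k^2\,\alpha_k \;-\; 2\bar P\sum_k w_k\,\alpha_k \;+\; \mathrm{const},
\]
and the constant is precisely $n\,\rho(t_{\mathrm{PI}})$: it depends only on $G$, which is shared with PI, and PI is the member of this subfamily with $F=0$. Hence the improvement in Bayes risk over PI of the estimator with weights $w$ is
\[
\Delta(w) \;=\; 2\bar P\sum_k w_k\,\alpha_k \;-\; \sum_k w_k^2\,\alpha_k ,
\]
and the claim reduces to maximizing $\Delta$ over $w$ subject to the unbiasedness constraint $\E[F]=0$, which by \eqref{expYk} is the single linear constraint $\sum_k w_k = 0$.

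I would then solve this constrained optimization directly. Since $\Delta$ is a concave quadratic in $w$ (each $\alpha_k \ge 0$; I assume $\alpha_k>0$, i.e.\ $\pi$ and $\mu$ differ on every slot, so that it is strictly concave and $H$ below is well defined --- slots with $\alpha_k=0$ are inert and can be dropped) and the constraint is affine, a single Lagrange multiplier suffices. Stationarity gives $2\bar P\alpha_k - 2w_k\alpha_k - \nu = 0$, i.e.\ $w_k = \bar P - \nu/(2\alpha_k)$; imposing $\sum_k w_k = 0$ fixes $\nu$ in terms of $\sum_k 1/\alpha_k$ and yields the optimal weights
\[
w_k^\star \;=\; \bar P\Bigl(1 - \frac{H}{\alpha_k}\Bigr), \qquad H \;=\; \frac{K}{\sum_k 1/\alpha_k},
\]
with $H$ the harmonic mean of $\{\alpha_k\}_{k=1}^K$; this is where the harmonic mean enters the picture, via the normalization of the multiplier.

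Finally I would substitute $w^\star$ back into $\Delta$. A shortcut avoids grinding through the squares: multiplying the stationarity condition by $w_k^\star$, summing, and using $\sum_k w_k^\star = 0$ gives $\sum_k (w_k^\star)^2\alpha_k = \bar P\sum_k w_k^\star\alpha_k$, so $\Delta(w^\star) = \bar P\sum_k w_k^\star\alpha_k$. Since $\sum_k w_k^\star\alpha_k = \bar P\sum_k(\alpha_k - H) = \bar P\,K\,(M-H)$ with $M = \tfrac1K\sum_k\alpha_k$ the arithmetic mean, we obtain $\Delta(w^\star) = \bar P^2 K(M-H)$, which is \eqref{mainresult}, and $M \ge H$ by the AM--HM inequality. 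The witnessing estimator is the member of \eqref{generalestimator}--\eqref{1way} with $g$ as in PI and $f_i = \sum_k w_k^\star\,\pi(a_i^k\,|\,x_i)/\mu_k(a_i^k\,|\,x_i)$.

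\textbf{Main obstacle.} There is no serious difficulty here: the heart of the argument is a routine constrained quadratic optimization. The points that need care are (i) verifying that the ``$\mathrm{const}$'' in \eqref{bayesriskbound} is genuinely shared with PI (it is, being a function of $G$ only, so it cancels when taking the difference), (ii) carrying the constraint $\sum_k w_k = 0$ through both the optimization and the final simplification, and (iii) the minor step of anticipating that the optimum collapses to an arithmetic-versus-harmonic-mean gap, which then makes the non-negativity immediate.
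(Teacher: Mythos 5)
Your proposal is correct and follows essentially the same route as the paper: reduce to the linearly constrained quadratic program in the weights $w_k$ via \eqref{bayesriskbound} and Lemma 2, solve it to get $w_k^\star = \bar P(1 - H/\alpha_k)$, and read off the optimal value as $\bar P^2 K(M-H)$ with nonnegativity from AM--HM. The only differences are cosmetic (you derive the optimum by Lagrange multipliers and use the stationarity identity to shortcut the back-substitution, where the paper simply states the solution can be verified), plus your welcome explicit remark that $\alpha_k>0$ is needed for $H$ to be well defined.
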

\begin{proof}
First note that using \eqref{expYk} we get 
\begin{align}
\E[F] &= \sum_k w_k \, \E[Y_k] = \sum_k w_k \,.
\end{align}
Plugging $\E[F^2]$ and $\E[GF]$ from \eqref{EF2} and \eqref{EGF} into \eqref{bayesriskbound}, we get the following linearly constrained quadratic program in the free parameters $w_k$:
\begin{align}
\min_{w_k} &\ \  \sum_k w_k^2 \, \alpha_k 
     - 
        2 \bar P \sum_k w_k \, \alpha_k  \label{objective2} \\
\mbox{s.t.}  &\ \ \sum_k w_k = 0  \, , \label{wconstraint}
\end{align}
whose solution can be easily verified to be
\begin{align}
    \label{wopt}
    w_k^* = \bar P \, \Big( 1 - \frac{H}{\alpha_k} \Big) \, ,
\end{align}
where $H$ is the \emph{harmonic} mean of the set $\{\alpha_k\}_{k=1}^K$.
For the optimal $w_k^*$ from \eqref{wopt}
the objective \eqref{objective2} reads
\begin{align}
\label{eqMH}
  - \bar P^2 \, K \, (M - H) 
 \, ,
\end{align}
where $M$ is the \emph{arithmetic} mean of the $\{\alpha_k\}_{k=1}^K$.
The inequality in \eqref{mainresult} follows from the classical HM-GM-AM inequalities.
\end{proof}

Let us briefly discuss the above result. Eq.~\eqref{mainresult}
shows the amount of risk reduction we can achieve over the PI estimator when using a weighted PI-like control variate.
How much of an improvement is this? To gain some insight, let us consider the case in which $\mu$ is uniform and $\pi$ is deterministic, in which case $\alpha_k$ corresponds to slot cardinality (see \eqref{alpha}). When all slots have the same cardinality, the improvement is zero, because the arithmetic and harmonic means coincide. In that case the control variate is not helping. On the other hand, if the slot cardinalities differ, the improvement can be significant, since the harmonic mean of a set of numbers can be much smaller than their arithmetic mean. Hence, our result shows that the new estimator can obtain maximal gains when the slots differ a lot in terms of action counts, which is a frequently occurring pattern in real world applications. In a sense, our approach corrects for the `symmetry' of the PI estimator in treating all slots equally via an unweighted sum in \eqref{G1way}. In the rest of the paper we will be using the name \newpi for the derived estimator. 

Note from \eqref{wopt} that the optimal estimator requires knowledge of the prior mean $\bar P$. 
If we instead use a different prior mean $P'$ than the true $\bar P$, the objective \eqref{objective2} reads
\begin{align}
    \label{improvementwrongP}
  P' (P' - 2 \bar P) \, K \, (M - H) 
 \, .
\end{align}
In the experiments section we will demonstrate empirically the effect of using a wrong prior.

\section{Experiments}
We employ simulations to corroborate the theoretical guarantees and further illustrate the characteristics of the proposed \newpi estimator.
We run ablation experiments under various settings that are relevant in different practical scenarios for slate OPE. The simulations are limited to the non-contextual bandit setting for simplicity, since the theory and its predictions for performance improvement over the PI estimator apply equally well to contextual and non-contextual bandits. This way we focus on providing guidance on problem settings in which the proposed \newpi estimator is expected to outperform the state of the art PI estimator under a factorized logging policy.

\subsection{Description of the simulations}
In the first set of simulations we assume an elementwise additive model for the Bernoulli rates
\begin{align}
     p(a) = \sum_{k=1}^K \phi_k(a^k)  \label{eq:Psum}
 \end{align}
for some functions $\phi_k$.
This is the setting in which PI under a factorized $\mu$ is known to be unbiased \citep{swaminathan2017nips}.
In Section \ref{sec:pairwise} we present results assuming a pairwise (across slots) additive model for the Bernoulli rates
 \begin{align}
 \label{pairwise}
     p(a) = \sum_k
     \sum_{j > k}  \phi_{kj}(a^k,a^j) \, .
 \end{align}
The main parameters of a simulated instance are the sample size $N$, the number of slots $K$, and the slot action cardinality tuple $D=[d_1,...,d_K]$ where $d_i$ is the number of available actions of the $i$-th slot. 
For our proposed estimator, we also need to specify the prior $P'$. 
In each simulation experiment, we first generate $T$ random reward tensors $p(a) \in [0,1]^{D}$ using \eqref{eq:Psum}, 
where the $\phi_k(\cdot)$ are drawn from a Gaussian distribution  $\mathcal{N}(\bar P / K, 0.1 \bar P / K)$.
For each sampled tensor, we generate a dataset of random slates and Bernoulli rewards, and use this dataset to evaluate a deterministic policy $\pi(a)=\mathbb{I}(a = [0,...,0])$ (wlog) with each of the candidate estimators. Since we know the ground truth, we can compute the bias and the variance of each estimator and from those its Bayes risk. In all results we report average $N*$MSE over the $T$ tensors.

For each experiment presented in this section, unless otherwise noted, we run $500$ simulations with $N=1e7$ for each sampled reward tensor. The number of drawn tensors is set to $T=200$, and we set $\bar P = 0.25$.

\begin{center}
\begin{figure*}
  \includegraphics[width=3.25in]{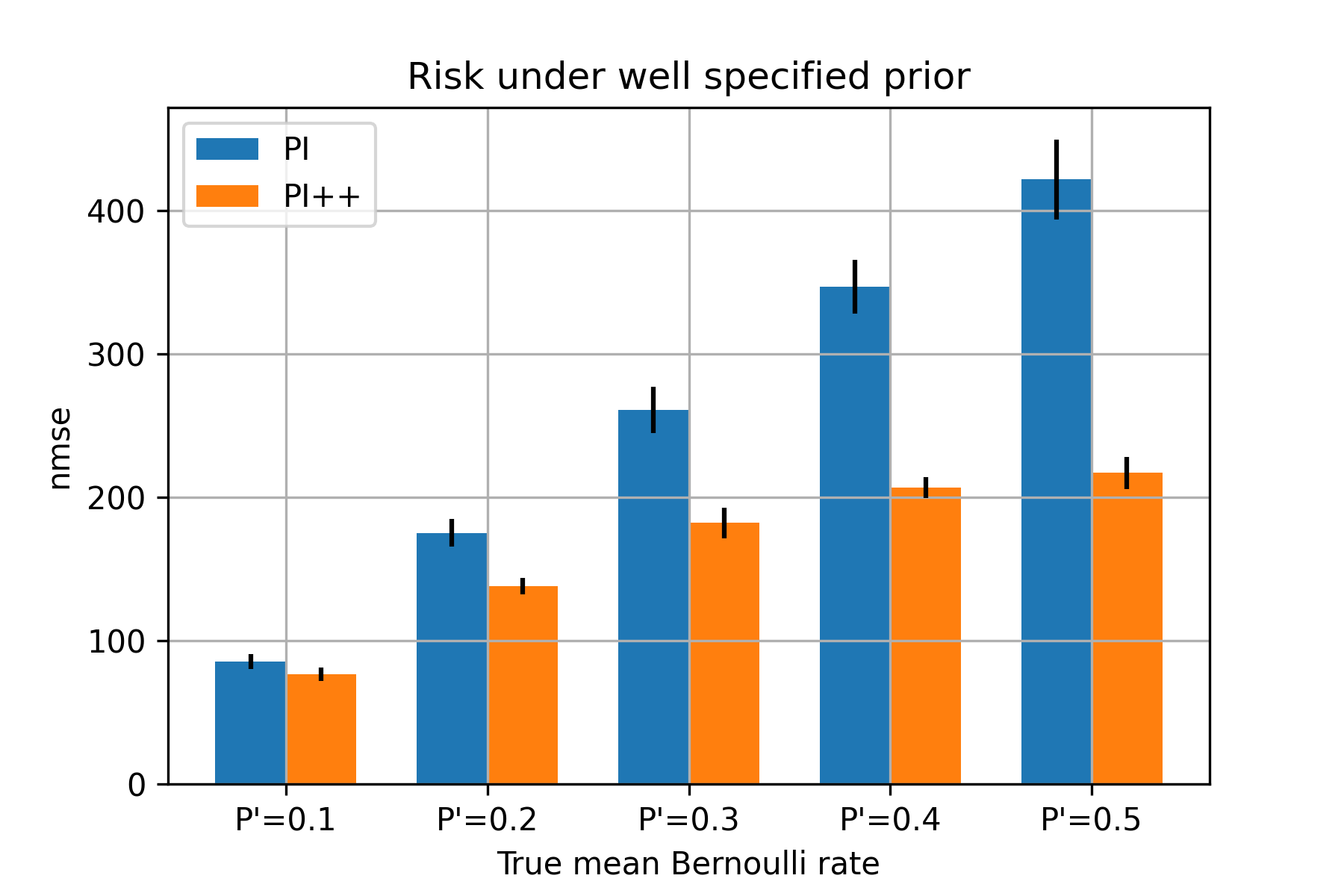}
  \includegraphics[width=3.25in]{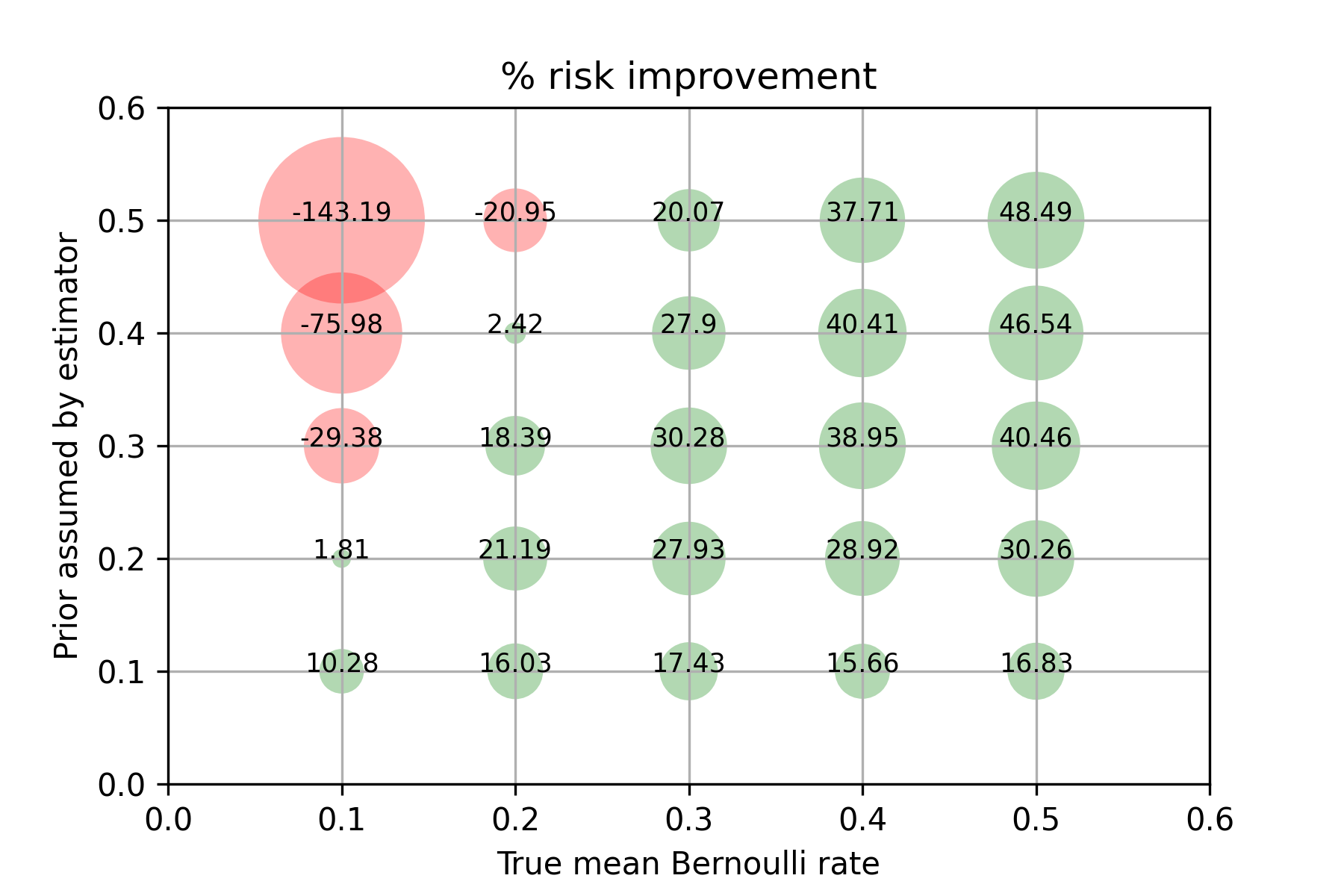}
  \caption{The impact of the assumed slate-level prior mean Bernoulli rate $P'$, on the performance of the \newpi estimator. The plot on the left shows the scenario where $P' = \bar P$. The figure on the right shows the impact of misspecification of the prior in comparison to the true bernoulli  rates. The simulation parameters corresponding to these results are: $T= 50, K = 3, D = [3, 50, 800]$ and $N= 1e7$.}  
  \label{fig:prior}
\end{figure*}
\end{center}

\subsection {Specification of the prior}
One of the salient aspects of  approaching estimator design from a Bayes risk perspective is the assumption (as part of the estimator) of a prior on the expected reward for the slates. In practice this is typically not a major concern, as the analyst working on an application can typically determine the reward probabilities through domain knowledge. In this experiment we investigate the impact of the assumed prior mean Bernoulli rate $P'$ w.r.t to the true mean $\bar P$. In this simulation, we fix $K = 3$, and action count tuple $D = [3, 50, 800]$. We then explore a grid of values for the true $\bar P$ and the $P'$ assumed by the estimator. We draw 50 tensors and conduct 1000 simulations per tensor. Each dataset per simulation consists of $N=1e7$ slates.

The results are shown in Figure \ref{fig:prior}. The plot on the left depicts the case in which the assumed prior accurately reflects the true prior, i.e., $P' =\bar P$. We observe that the improvement in risk of the \newpi estimator w.r.t the PI estimator increases with increasing $P'$ (as predicted by the analysis of the estimator in equation \eqref{mainresult}). From the plot on the right in Figure \ref{fig:prior} we see that for a majority of cases, even with a misspecified prior $P'$, \newpi achieves robust risk improvements over the PI estimator. In particular, as long as $P' \leq 2\bar P$, \newpi improves upon the PI estimator with the gains reaching their maximum when $P'$ approaches $\bar P$. This is in perfect agreement with the analytical result in \eqref{improvementwrongP}.

\subsection{Number of actions per slot}
In this experiment we investigate the effect of action set cardinalities for the slots. Large action sets pose a major challenge to slate optimization and off-policy evaluation, and hence it is critical to understand the behavior of the estimators in dealing with large action sets. In this simulation we use slates with two slots ($K=2$). For each slot we define the number of available actions by picking from the set $\{2, 10, 100, 1000\}$, and run the simulations as described previously. As in the previous experiment, we use sample size $N=1e7$, we generate $T=50$ tensors, and run 1000 simulations per tensor.

Figure \ref{fig:D1D2} shows the results of the simulations. Each point on the grid reflects a relative percentage improvement in risk achieved by the proposed \newpi estimator when compared to the original PI estimator. From the figure it is clear that the gains are robust when the number of actions differ between slots (nondiagonal points on the grid). The two estimators are largely equivalent when the two slots have the same number of actions, as predicted by a close reading of eq.~\eqref{eqMH} (see also the discussion after eq.~\eqref{eqMH}).

\begin{figure}
\begin{center}
  \includegraphics[width=3.4in]{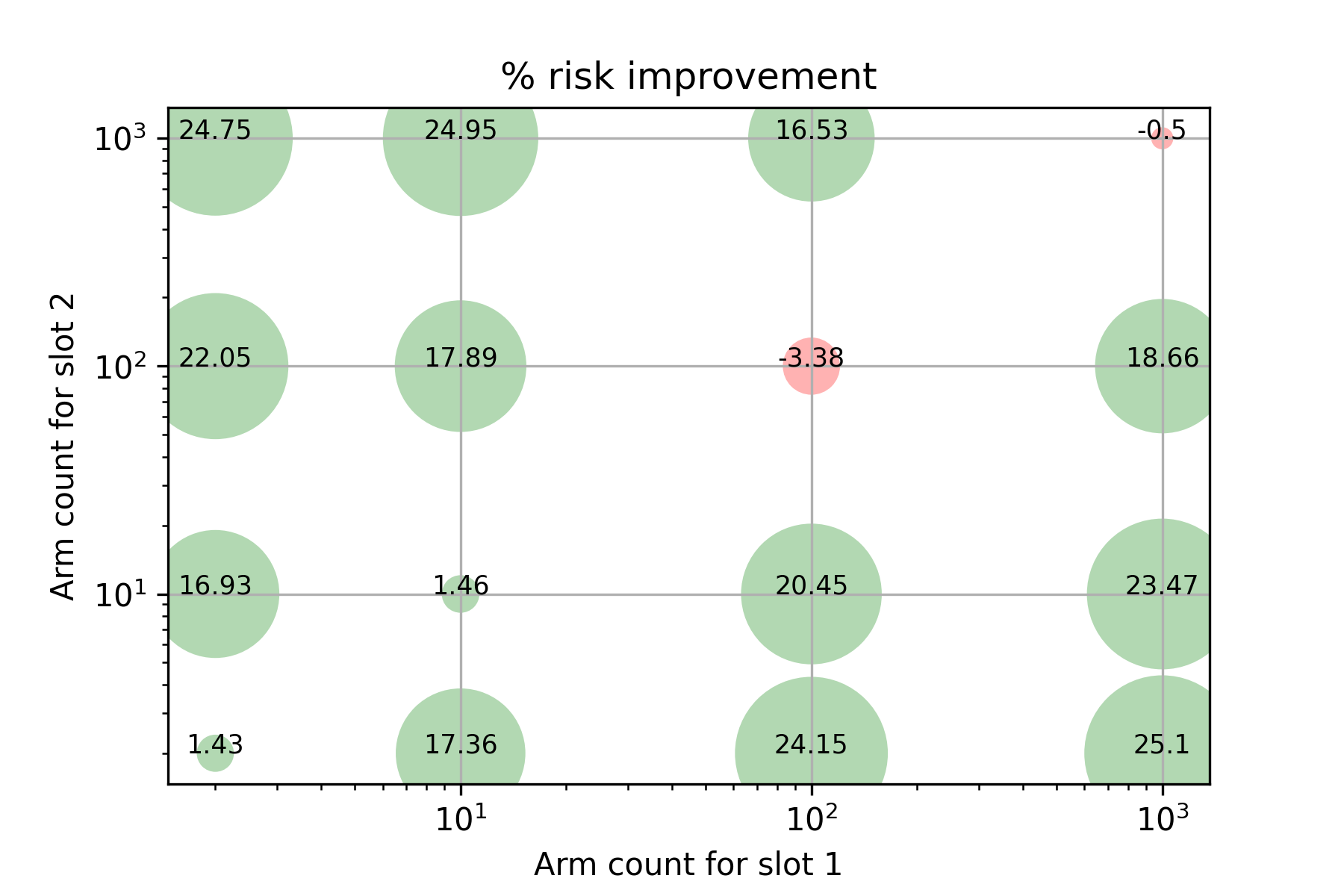}
  \caption{The behavior of the \newpi estimator in relation to the number of actions per slot. The simulation parameters are: $K=2$, $d_k \in $\{2, 10, 100, 1000\}, $T=50$, $N=1e7$. The plot shows the percentage relative improvement achieved by the proposed \newpi estimator over the PI estimator.}
  \label{fig:D1D2}
\end{center}
\end{figure}

\begin{figure*}
\begin{center}
  \includegraphics[width=3.15in]{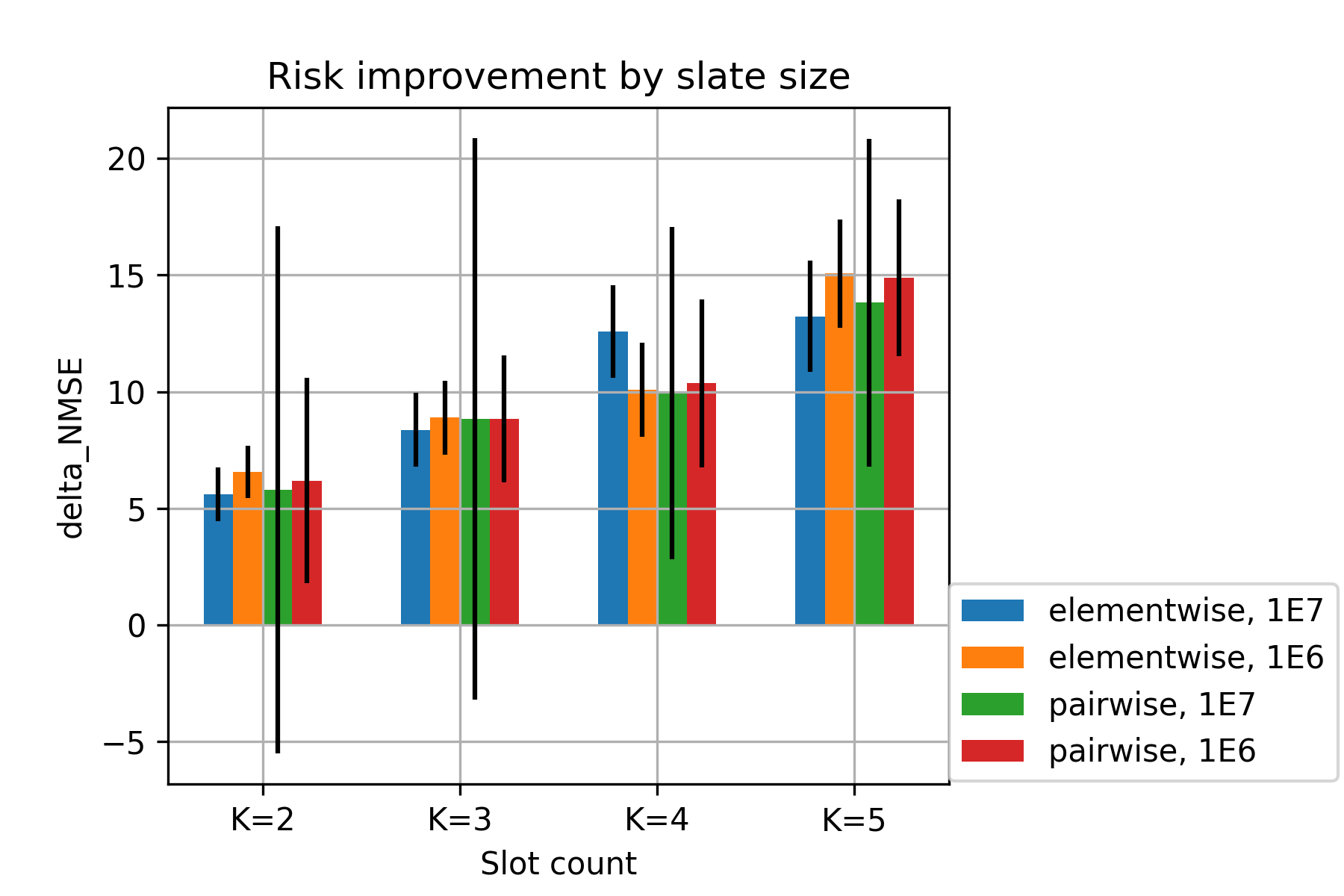}
  \includegraphics[width=3.15in]{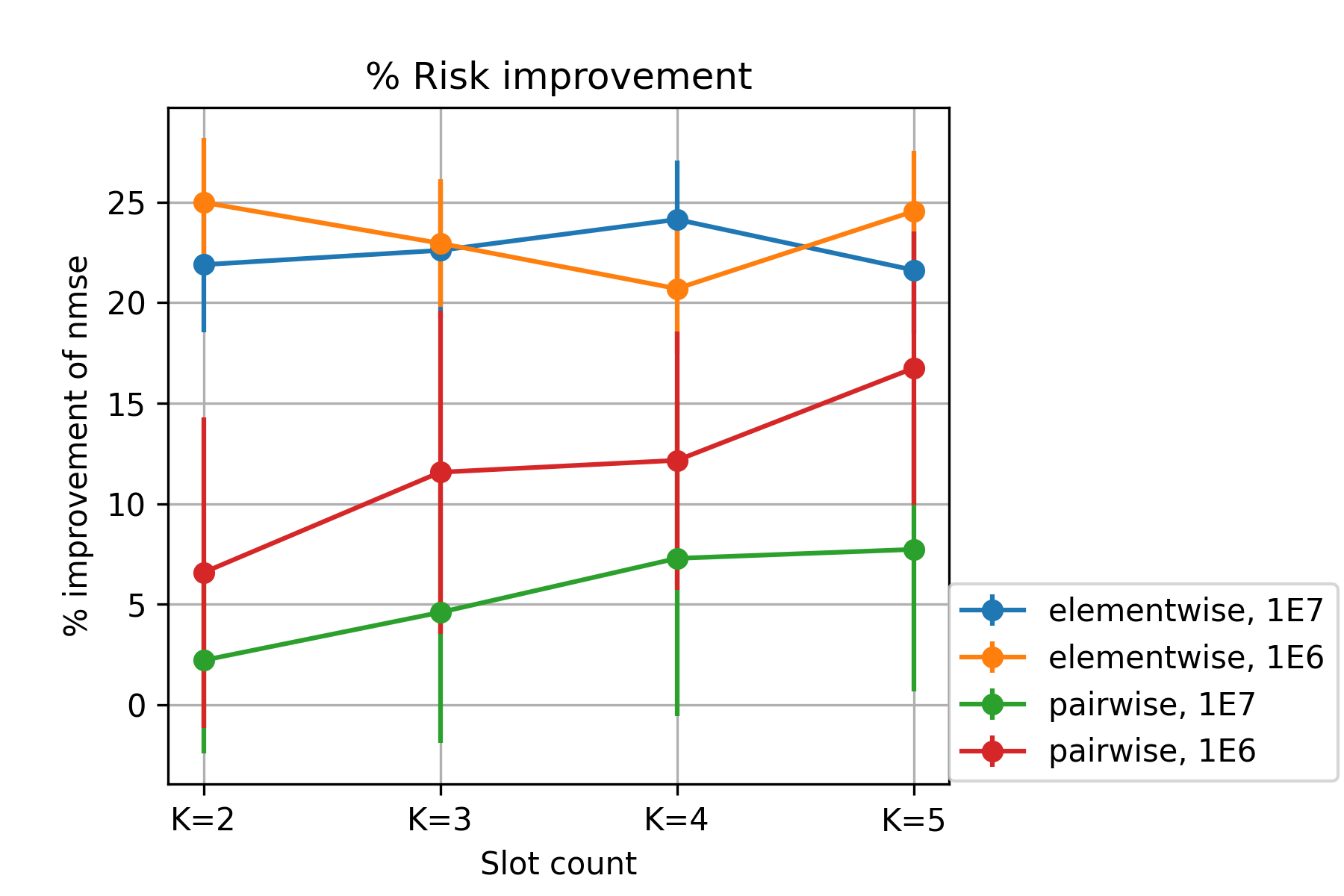}
  \caption{Behavior of the estimators under elementwise and pairwise additive reward models, as a function of the number of slots $K \in \{2,3,4,5\}$, and sample size $N \in \{1e6, 1e7\}$. Here $T=200$ and the number of simulations per tensor is 500. Slot action cardinalities, $D$, are set deterministically. Please see the text for details. The plot on the left shows the reduction in risk achieved by the \newpi estimator w.r.t the PI estimator. The plot on the right shows relative risk improvement over the PI estimator. }
  \label{fig:linearK}
\end{center}
\end{figure*}

\subsection{Number of slots}
\label{sec:pairwise}

In this experiment we investigate the performance of \newpi for slates of different sizes. We first simulate with a deterministic scheme for slot action cardinalities. Using $\min(D) =2$ and $\max(D) = 100$, we evenly divide this interval to obtain the cardinalities for the slots in the slate. For example, for a slate with $K=4$,  we pick the action cardinalities for the slots as $D = [2, 33, 66, 100]$. $P'$ is set to 0.25 and the true mean Bernoulli rate in the simulator is also set to $\bar P = 0.25$. We study the behavior under two distinct reward models: The elementwise additive model \eqref{eq:Psum} (for which both \newpi and PI are unbiased) and the pairwise additive reward model \eqref{pairwise}. Here we set $T=200$ and reduce the number of simulations per tensor to 500. Two different sample sizes are investigated, $N \in \{1e6, 1e7\}$. The estimators are examined for $K \in \{2,3,4,5\}$. We measure the improvement by the metric $delta\_nmse = N*MSE_{PI}^{} - N*MSE_{\newpi}^{}$ for every tensor and report the improvements in the left plot of Figure~\ref{fig:linearK}. From this plot we see that in both settings the risk improvement of \newpi over PI grows linearly with the number of slots, as predicted by eq.~\eqref{mainresult}. The plot on the right shows percentage improvement over the PI estimator.

We next repeat the experiment, but this time assigning the slot action cardinalities randomly from the interval, i.e., $d_k \sim U(2,100)$. We only investigated the elementwise additive reward model. The rest of the setting remains unchanged. This simulation helps us tie together the impact of the number of slots and the slot action cardinalities in one coherent picture. The results are shown in Figure \ref{fig:KandD}. The results closely follow the predictions from the theoretical analysis, see eq.~\eqref{eqMH}. In particular, the observed linear dependence of the improvement in risk ($delta\_nmse$) with K, as well as with the gap between the arithmetic and harmonic means of the slot action cardinality sets, are in close agreement with the theory, thus validating the robust gains of the \newpi estimator that were predicted by our analysis.

\begin{figure}
\begin{center}
  \includegraphics[width=3.45in]{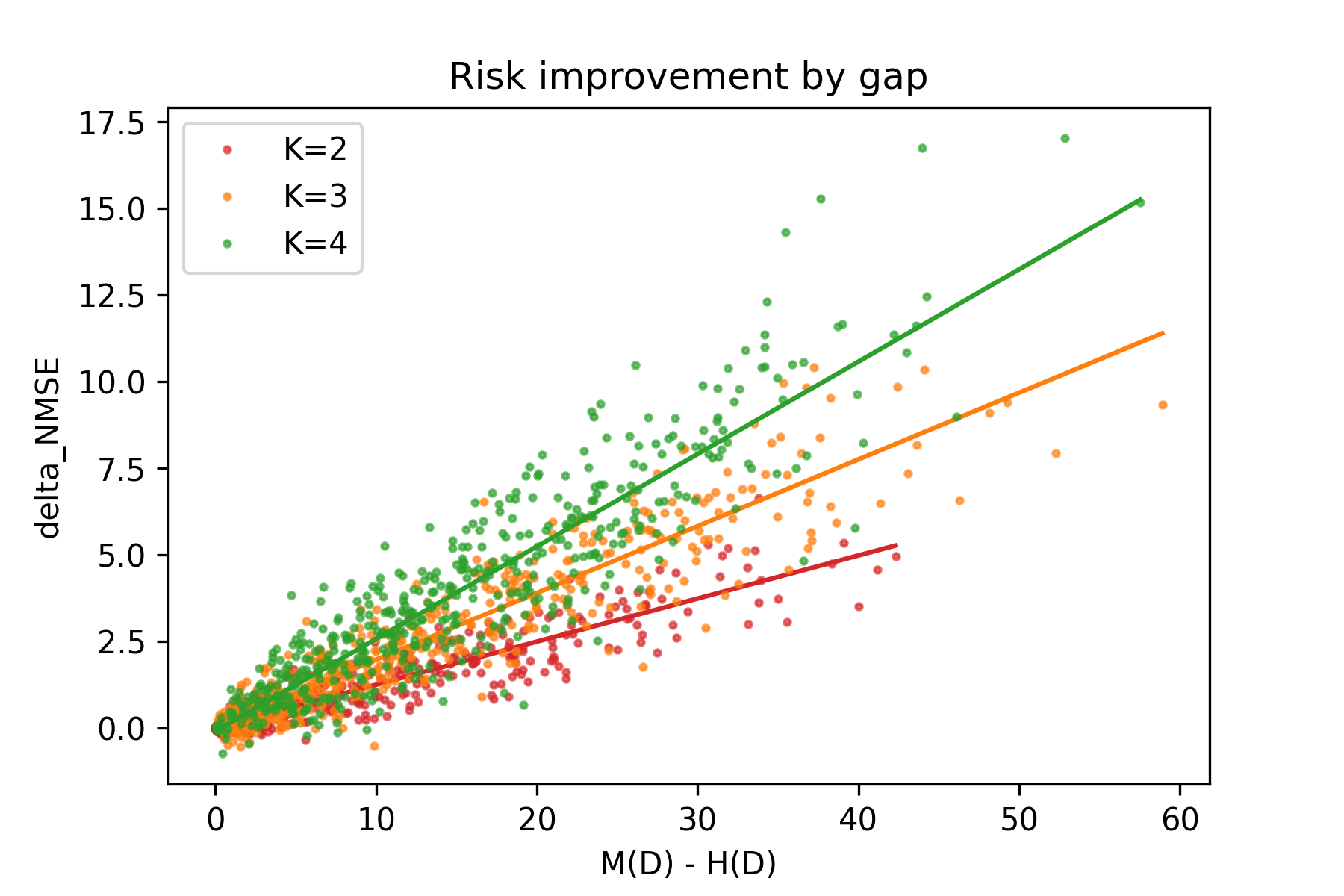}
  \caption{A combined view of the impact of the number of slots and the slot-level action counts on risk improvement. The reward model is elementwise additive. 
  $K \in \{2,3,4,5\}$ and 
  $N \in \{1e6, 1e7\}$.  
  We set
  $T=200$, the number of simulations per tensor is 500, and $d_k \sim U(2,100)$. Each point corresponds to $delta\_{nmse}$ for a tensor. We also show the best-fit line through the point cloud corresponding to each value of K. The $R^2$ values for $K = [2, 3, 4]$ are $[0.93, 0.93, 0.91]$ respectively.}
  \label{fig:KandD}
\end{center}
\end{figure}

\section{Conclusions}
We studied the problem of off-policy evaluation for slate bandits through the lens of Bayes risk. Using a control variate approach, we identified a new estimator that is guaranteed to have lower risk than the PI estimator of  \citet{swaminathan2017nips}. We provided theoretical and empirical evidence that the risk improvement over PI grows linearly with the number of slots and linearly with the gap between the arithmetic and the harmonic mean of a set of slot-level $\chi^2$-divergences between the logging and target policies. 

An interesting avenue for further research would be to address more complex reward models and analyze the involved bias-variance tradeoffs under model misspecification. Another avenue would be to relax the requirement of unbiasedness of the resulting estimator, and directly optimize the risk within the parametric class of estimators, similar to 
\citet{farajtabar2018icml} and \citet{su2020doubly}. 


\end{document}